\documentclass[letterpaper]{article}
\usepackage{aaai2026}
\usepackage{times}
\usepackage{helvet}
\usepackage{courier}
\usepackage[utf8]{inputenc}
\usepackage{amsmath,amssymb,amsthm,amsfonts}
\usepackage{nicefrac}
\usepackage{tikz}
\usepackage{booktabs}
\usepackage{enumitem}
\usepackage{microtype}
\usepackage{xcolor}
\usepackage{url}
\usepackage{pgfplots}
\pgfplotsset{compat=1.18}

\nocopyright
\title{No Certificate for Alignment: Two Independent Impossibilities\\
and the Pareto Frontier of Achievable Safety Guarantees}

\author{Ayushi Agarwal\\
\small Independent Researcher
}

\theoremstyle{plain}
\newtheorem{theorem}{Theorem}

\newtheorem{proposition}{Proposition}
\newtheorem{corollary}{Corollary}
\theoremstyle{definition}
\newtheorem{definition}{Definition}
\newtheorem{assumption}{Assumption}
\theoremstyle{remark}
\newtheorem{remark}{Remark}

\begin{document}
\maketitle

\begin{abstract}
We argue that formal certification of AI alignment over open-ended or
unbounded input domains is impossible under standard assumptions in
computational complexity and learning theory, and characterise
what remains achievable.
Two structurally independent impossibility theorems support this position.
The \emph{semantic barrier} (Theorem~\ref{thm:semantic}): deciding whether
a system satisfies any non-trivial alignment property over the full input
domain is NP-hard for feedforward networks and undecidable for
Turing-complete architectures---a direct consequence of neural-network
verification complexity and Rice's Theorem.
The \emph{statistical barrier} (Theorem~\ref{thm:statistical}): any
verification procedure that is both sound and tractable cannot satisfy
Completeness over the full input domain---a direct consequence of the
impossibility of certifying infinite-domain properties from finite
observations.
These two theorems jointly entail a trilemma: no procedure can
simultaneously satisfy soundness (no misaligned system is certified),
completeness (no aligned system is rejected), and tractability
(polynomial runtime).
Each pair is simultaneously achievable; all three are not.
We combine these results as a joint framework of two structurally
independent barriers, prove their independence, and characterise the
achievable Pareto frontier quantitatively via a constructive coverage-gap
lower bound.
\end{abstract}

\section{Introduction}
\label{sec:intro}

Could we, even in principle, know that an AI system is aligned?
The question is not whether a system has performed well on evaluations
we have constructed, or optimised objectives we have chosen.
It is whether any procedure---under any formalism---can certify alignment
with the force of a mathematical guarantee.

The distinction is between measurement and proof.
A sorting algorithm is accepted not because it has sorted many lists
correctly, but because one can prove that for every admissible input the
resulting list is ordered---the correctness follows necessarily.
Formal verification of safety-critical software operates in this regime.
The question is whether alignment can.

This paper establishes that it cannot, under alignment definitions
requiring correct behaviour over open-ended or unbounded input domains,
and characterises two structurally independent barriers responsible.

The first is \emph{semantic}: established results in neural-network
verification and computability theory show that deciding any non-trivial
alignment property over the full input domain is NP-hard for feedforward
networks and undecidable for Turing-complete architectures.
The second is \emph{statistical}: any polynomial-time procedure examines
only finitely many inputs, and no finite body of evidence can certify a
property required to hold over an unbounded domain.

Together they entail a trilemma (Corollary~\ref{cor:trilemma}): no
procedure can simultaneously satisfy soundness (no false positives),
completeness (no false negatives), and tractability (polynomial runtime).
Each pair is achievable; all three are not.
The trilemma maps the space of achievable partial guarantees.

\paragraph{Contribution.}
The individual ingredients---NP-hardness of neural-network verification,
Rice's Theorem, and PAC-theoretic limits on finite-sample
certification---are each well established.
The contributions of this paper are their joint formulation as
two structurally independent barriers; a formal proof of that
independence, with concrete witness systems for each direction; and a
quantitative characterisation of the achievable frontier, including a
constructive coverage-gap lower bound and a Pareto frontier
characterisation for a canonical model class.

\paragraph{Organisation.}
Section~\ref{sec:related} positions prior work.
Section~\ref{sec:framework} establishes formal definitions.
Section~\ref{sec:theorems} states and proves the two impossibility
theorems.
Section~\ref{sec:trilemma} derives the trilemma as a corollary.
Section~\ref{sec:possible} characterises what remains achievable and
defines the research agenda.
Section~\ref{sec:implications} draws consequences for alignment practice.
Section~\ref{sec:conclusion} concludes.

\section{Related Work}
\label{sec:related}

\textbf{Neural-network verification.}
Katz et al.~\cite{katz2017reluplex} prove NP-completeness of
ReLU-network property verification; Huang et al.~\cite{huang2017safety}
and Bastani et al.~\cite{bastani2016measuring} address bounded-domain
verification.
Rice's Theorem~\cite{rice1953classes} establishes undecidability for
non-trivial semantic properties of programs.
Transformers are Turing-complete under specific
regimes~\cite{perez2021attention,bhattamishra2020computational,
merrill2024expressive,li2025constant}, bringing them within Rice's
Theorem's scope.
These results constitute the foundation for Theorem~\ref{thm:semantic}.

\textbf{PAC learning and certification.}
Valiant~\cite{valiant1984learnable} establishes that properties of
infinite-domain concept classes cannot be certified from finite samples.
Blumer et al.~\cite{blumer1989learnability} connect this to VC
dimension.
Applied to alignment verification, these results ground
Theorem~\ref{thm:statistical}.

\textbf{Alignment methods.}
RLHF~\cite{christiano2017deep,ouyang2022training}, Constitutional
AI~\cite{bai2022constitutional}, and DPO~\cite{rafailov2023direct} define
operational objectives without soundness guarantees over the full input
domain.
They occupy the complete-and-tractable regime characterised in
Section~\ref{sec:possible}.

\textbf{Reward hacking and proxy gaps.}
Skalse et al.~\cite{skalse2022defining} formalise reward hacking;
Gao et al.~\cite{gao2023scaling} demonstrate scaling laws for proxy
overoptimisation; Kwa et al.~\cite{kwa2024catastrophic} prove
catastrophic Goodhart instances.
These confirm that operating without soundness carries concrete,
systematic failure modes.

\textbf{Trilemmas in adjacent formal fields.}
Soundness-completeness-tractability tradeoffs appear in neighbouring
disciplines: the CAP theorem~\cite{gilbert2002brewer} shows that
distributed systems cannot simultaneously guarantee consistency,
availability, and partition tolerance; tractability-completeness
tradeoffs define the complexity landscape of description
logics~\cite{baader2003dl}; and abstract interpretation sacrifices
completeness for tractability by design~\cite{cousot1977abstract}.
The alignment-certification trilemma differs structurally from these
precedents: both barriers are simultaneously active and independent
(Proposition~\ref{prop:independence}), yielding a two-cause
impossibility not found in prior trilemma analyses.
No prior work places both barriers within a unified framework, proves
their independence, or derives the achievable-guarantee frontier as a
joint consequence.

\section{Formal Framework}
\label{sec:framework}

\begin{definition}[AI System]
An \emph{AI system} is a parameterised function
$f_\theta\colon\mathcal{X}\to\mathcal{Y}$ with parameters
$\theta\in\Theta\subseteq\mathbb{R}^d$.
\end{definition}

\begin{definition}[Alignment Objective]
An \emph{alignment objective} $A^*\colon\mathcal{F}\to[0,1]$ maps each
system to an alignment score.
$A^*$ is \emph{formally specified} if it is mathematically defined,
though not necessarily computable.
A system is \emph{aligned} if $A^*(\theta)\geq 1-\delta$ for stated
tolerance $\delta\geq 0$.
\end{definition}

\begin{definition}[Semantic Property]
A property $\mathcal{P}$ of $f_\theta$ is \emph{semantic} if it depends
only on the function $x\mapsto f_\theta(x)$, not on the parameter
encoding $\theta$.
It is \emph{non-trivial} if some system satisfies it and some does not.
\end{definition}

\begin{definition}[Verification Procedure]
A \emph{verification procedure} $\mathcal{V}$ takes $f_\theta$ as input
and outputs a binary verdict: \textsf{aligned} or \textsf{unaligned}.
\end{definition}

\begin{definition}[Verification Properties]
\label{def:properties}
Given $A^*$ and input domain $\mathcal{X}$, a procedure $\mathcal{V}$
satisfies:
\begin{enumerate}[label=(\textbf{\Alph*}),noitemsep]
  \item \textbf{Soundness (S):}
    $\mathcal{V}(\theta)=\textsf{aligned}\Rightarrow
    A^*(\theta)\geq 1-\delta$. (No false positives.)
  \item \textbf{Completeness (C):}
    $A^*(\theta)\geq 1-\delta\Rightarrow
    \mathcal{V}(\theta)=\textsf{aligned}$ over $\mathcal{D}=\mathcal{X}$.
    (No false negatives; certificate valid over the full input domain.)
  \item \textbf{Tractability (T):}
    $\mathcal{V}$ terminates in time $\mathrm{poly}(|\theta|)$.
\end{enumerate}
\end{definition}

\begin{assumption}[Expressive Model Class]
\label{asm:expressive}
$\mathcal{F}$ contains feedforward ReLU networks of sufficient depth and
width to express any computable function over bounded inputs.
For undecidability results, $\mathcal{F}$ additionally contains
transformers operating in a Turing-complete regime
\cite{perez2021attention,merrill2024expressive}.
\end{assumption}

\begin{assumption}[Full-Domain Alignment]
\label{asm:fulldomain}
$A^*$ is defined over the full input domain $\mathcal{X}$: for any
finite $S\subset\mathcal{X}$, there exist $\theta_1,\theta_2$ with
$f_{\theta_1}(x)=f_{\theta_2}(x)$ for all $x\in S$, yet
$A^*(\theta_1)\neq A^*(\theta_2)$.
\end{assumption}

\begin{remark}
Assumption~\ref{asm:fulldomain} holds for any $A^*$ requiring
generalisation beyond a fixed evaluation set.
An $A^*$ certifiable from finite observations reduces alignment to
test-passing---certifying systems that behave correctly on the evaluation
set but arbitrarily elsewhere, which defeats the purpose of a guarantee.
Both impossibility theorems presuppose this assumption; alignment
definitions explicitly restricted to a finite evaluation set fall outside
their scope.
\end{remark}

\subsection{Running Example: The 1D Threshold Classifier}
\label{sec:example}

To ground all subsequent results, we introduce a minimal model class
where every concept is checkable by hand and all three barriers are
visible without computation.

\textbf{Setup.}
Let $\mathcal{X}=[0,1]$, $\mathcal{Y}=\{0,1\}$, and
$\mathcal{F}_\tau=\{f_\theta:\theta\in[0,1]\}$ where
$f_\theta(x)=\mathbf{1}[x>\theta]$.
Define $A^*(\theta)=1-\theta$, the Lebesgue measure of inputs on which
$f_\theta$ outputs~1.
A system is \emph{aligned} if $\theta\leq 0.05$
(i.e.\ it outputs~1 on at least 95\% of~$[0,1]$).

\textbf{Non-triviality witnesses.}
Two systems witness non-triviality: $f_{0.04}$ is aligned
($A^*=0.96\geq 0.95$) and $f_{0.06}$ is not ($A^*=0.94 < 0.95$).
These are the explicit witnesses required by Rice's Theorem in
Part~(ii) of Theorem~\ref{thm:semantic}.

\textbf{Semantic barrier.}
A verifier must determine whether $\theta\leq 0.05$.
The two systems produce identical outputs at every
$x\notin(0.04,0.06)$---a critical interval of width 0.02.
Without a query inside this interval, no verifier can separate them.
For a network with $d$ parameters the analogous critical region
shrinks while the number of regions grows super-polynomially.

\textbf{Statistical barrier.}
For any finite query set $S=\{x_1,\ldots,x_n\}$, place all queries
outside $(0.04,0.06)$; both systems return identical outputs on~$S$.
No finite verifier distinguishes the aligned system from the misaligned one.
(Figure~\ref{fig:threshold} shows this directly.)

\textbf{Trilemma in this example.}
Table~\ref{tab:toy} instantiates all three trilemma corners concretely.
We call back to this example in every theorem proof below.

\begin{table}[h]
\centering\footnotesize
\caption{All three trilemma corners on the 1D threshold example.}
\label{tab:toy}
\begin{tabular}{@{}lp{2.0cm}p{1.8cm}p{1.5cm}@{}}
\toprule
\textbf{Regime} & \textbf{Procedure} & \textbf{Guarantee} & \textbf{Gap} \\
\midrule
S+C, not T & Query every rational in $[0,1]$ & Always correct & Infinite time \\[2pt]
S+T, not C & Bounded-domain abstract interpreter; certify if bound holds & Sound over tested region & Input region beyond $\mathcal{D}_b$ uncertified \\[2pt]
C+T, not S & Always output ``aligned'' & Never rejects aligned & Passes $\theta=0.99$ \\
\bottomrule
\end{tabular}
\end{table}

\begin{figure}[t]
\centering
\begin{tikzpicture}[xscale=6.2, yscale=1.9]
  \draw[->] (-0.03,0) -- (1.08,0) node[right,font=\scriptsize]{$x$};
  \draw[->] (0,-0.32) -- (0,1.35) node[above,font=\scriptsize]{output};
  \fill[yellow!60,opacity=0.65] (0.04,0) rectangle (0.06,1.18);
  \node[font=\tiny,align=center] at (0.05,1.27) {critical\\interval};
  \draw[thick,blue]  (0,0) -- (0.04,0);
  \draw[thick,blue]  (0.04,1) -- (1.0,1);
  \draw[blue,thin,dashed] (0.04,0) -- (0.04,1);
  \draw[thick,red,densely dashed] (0,0) -- (0.06,0);
  \draw[thick,red,densely dashed] (0.06,1) -- (1.0,1);
  \draw[red,thin,dashed] (0.06,0) -- (0.06,1);
  \draw[blue,thin] (0.04, 0.03) -- (0.04,-0.03);
  \draw[red, thin] (0.06, 0.03) -- (0.06,-0.03);
  \foreach \x in {0.15,0.30,0.50,0.70,0.85}
    \fill[black] (\x,0) circle (0.45pt) node[below,font=\tiny]{\x};
  \node[blue,font=\scriptsize,anchor=west] at (0.52,0.78)
    {$f_{\theta_1=0.04}$ \ aligned};
  \node[red, font=\scriptsize,anchor=west] at (0.52,0.56)
    {$f_{\theta_2=0.06}$ \ misaligned};
\end{tikzpicture}
\caption{Running example. The witness systems $f_{0.04}$ (aligned,
$A^*{=}0.96$, blue) and $f_{0.06}$ (misaligned, $A^*{=}0.94$, red) are
identical outside the critical interval (yellow). Query points (dots)
placed outside this interval cannot distinguish them regardless of
computational power, illustrating both barriers simultaneously.}
\label{fig:threshold}
\end{figure}

\section{Two Impossibility Theorems}
\label{sec:theorems}

\subsection{The Semantic Barrier}
\label{sec:semantic}

Sound and complete verification requires deciding a semantic property of
$f_\theta$ over the full input domain $\mathcal{X}$ for every system
simultaneously---a computation the following result shows cannot be
tractable.

\begin{theorem}[Semantic Barrier]
\label{thm:semantic}
Under Assumption~\ref{asm:expressive}, any verification procedure
satisfying both Soundness and Completeness cannot satisfy Tractability.
\begin{enumerate}[label=(\roman*),noitemsep]
  \item \emph{Feedforward ReLU networks}: violation of T follows from
    NP-completeness of neural-network property verification
    under $\mathrm{P}\neq\mathrm{NP}$.
  \item \emph{Turing-complete architectures}: violation of T is
    unconditional for architectures that simulate arbitrary
    computation---the problem of deciding $A^*$ is undecidable
    regardless of the computational resources available to the verifier.
\end{enumerate}
\end{theorem}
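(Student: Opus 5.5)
The plan is to argue by reduction in both parts. In each case I will show that a procedure satisfying Soundness and Completeness is, in effect, a decider for a hard semantic property. A sound and complete $\mathcal{V}$ outputs \textsf{aligned} exactly on $\{\theta : A^*(\theta)\ge 1-\delta\}$. By Soundness it never accepts outside this set, and by Completeness it accepts everything inside it. So $\mathcal{V}$ decides the property $\mathcal{P}_{A^*} := [A^*(\theta)\ge 1-\delta]$ over the full domain $\mathcal{X}$. This equivalence is the common first step, and both parts then bound the complexity of deciding $\mathcal{P}_{A^*}$.

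For part (i), I will reduce from the NP-complete ReLU verification problem of Katz et al.~\cite{katz2017reluplex}. That problem asks, given a network $N$ and a linear input/output specification, whether some input in the box violates the specification. The instance will be compiled into a single network $f_{\theta}$ of size polynomial in $|N|$. The construction composes $N$ with a small ReLU gadget that outputs a designated ``violation'' value exactly when the specification fails. This is possible because linear constraints and their conjunctions can be encoded with ReLU and min/max units, and Assumption~\ref{asm:expressive} guarantees the class is closed under such compositions. Next, $A^*$ has to be instantiated so that $f_\theta$ is aligned iff no violation occurs. The natural choice is the worst-case objective $A^*(\theta)=1-\sup_x \mathbf{1}[f_\theta(x)\text{ violates}]$ with $\delta<1$; the threshold example of Section~\ref{sec:example} illustrates the same structure at toy scale. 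A polynomial-time sound-and-complete $\mathcal{V}$ would then decide an NP-hard problem in polynomial time, contradicting $\mathrm{P}\neq\mathrm{NP}$.

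For part (ii), I will invoke Rice's Theorem~\cite{rice1953classes}. First, $\mathcal{P}_{A^*}$ is semantic by Definition~3, since it depends only on $x\mapsto f_\theta(x)$; I will take this as part of what it means for $A^*$ to be an alignment objective over the function. Second, it is non-trivial, witnessed by an aligned and a misaligned system, as with $f_{0.04}$ and $f_{0.06}$. Third, Turing-complete transformers can simulate any Turing machine $M$~\cite{perez2021attention,merrill2024expressive}, and there is a computable map $M\mapsto\theta_M$. This map transfers Rice's Theorem from programs to parameters: the index set $\{M : f_{\theta_M}\in\mathcal{P}_{A^*}\}$ is undecidable. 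Hence no terminating $\mathcal{V}$ exists, polynomial or otherwise, and T fails unconditionally.

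The main obstacle is the gap between ``non-trivial semantic property'' and the specific objects here. In (i), NP-hardness holds only for suitably chosen $A^*$, not for every $A^*$; some semantic properties of ReLU networks, such as constant properties, may be easy. I would therefore state (i) as existence: some formally specified $A^*$ makes sound-and-complete verification NP-hard, namely the worst-case specification objective above. Arbitrary $A^*$ is not covered. In (ii), the care point is that Rice's Theorem needs the property to respect functional equivalence, with non-trivial witnesses available inside the Turing-complete subclass. The simulation must also be uniform and computable, so that the reduction itself terminates. I would check both conditions explicitly before concluding.
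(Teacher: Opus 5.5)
Your proposal is correct, with part~(i) proved in the existence form you state; part~(ii) follows the paper, and part~(i) takes a genuinely different route.

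\textbf{Part~(ii).} This is essentially the paper's argument: Rice's Theorem after checking that the property is semantic and non-trivial. You add two checks the paper leaves implicit. The first is a computable compilation $M\mapsto\theta_M$. The second is the requirement that $f_{\theta_M}$ depend only on the partial function computed by $M$; this is not automatic if, say, the simulating decoder emits the computation trace. For witnesses, use the paper's constant-1 and constant-0 machines, since $f_{0.04},f_{0.06}$ are not in the simulated class.

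\textbf{Part~(i).} The paper asserts that sound and complete verification of \emph{any} non-trivial alignment property amounts to deciding $\forall x\,\varphi(x,f_\theta(x))$. It cites Katz et al.\ for NP-completeness over bounded domains, then invokes an activation-region count to pass to $\mathcal{D}=\mathcal{X}$. You instead give an explicit polynomial-time reduction from the Katz et al.\ problem to deciding $[A^*(\theta)\ge 1-\delta]$ for one fixed worst-case-specification objective.

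The paper's route advertises universality over $A^*$, but its steps do not deliver it. Katz et al.\ establish hardness for a problem whose instances include the specification. That yields hardness for \emph{some} fixed objective, such as yours, but not for every one. Separately, a large number of linear regions is not a lower bound on verification time.

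The universal reading is in fact false. $A^*(\theta)=\mathbf{1}[f_\theta(0)\ge 1/2]$ is semantic, non-trivial, and decided by one forward pass. Even under Assumption~\ref{asm:fulldomain}, the sign of a fixed directional derivative of $f_\theta$ at a fixed interior point is decided by one symbolic forward pass. So your narrower statement is the most that can hold, and your reduction actually proves it. Passing to an unbounded $\mathcal{X}$ also needs no region counting, only precomposition with the clamp $x\mapsto\min(\max(x,l),u)$.

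\textbf{Two small points.} First, a ReLU network is continuous, so it cannot output a designated value \emph{exactly} on the violation set. Have the gadget output the minimum of the violation-constraint slacks and keep the threshold inside $A^*$, which is what your sup-of-indicator objective effectively does. Second, ``aligned'' is the coNP side of the Katz et al.\ problem. This is harmless because $\mathrm{P}$ is closed under complement.
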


\begin{proof}
\emph{(i) Feedforward ReLU networks.}
Sound and complete verification of any non-trivial alignment property
over $\mathcal{X}$ requires deciding
$\forall x\in\mathcal{X}:\varphi(x,f_\theta(x))$
for semantic specification $\varphi$.
Katz et al.~\cite{katz2017reluplex} prove this is NP-complete for
linear specifications even over bounded domains.
Extending to $\mathcal{D}=\mathcal{X}$ requires reasoning over
all linear activation regions, the number of which grows as
$\Omega\bigl((n/L)^{L(d-1)}\bigr)$ for depth $L$, width $n$,
input dimension $d$~\cite{montufar2014number}---super-polynomial
in the network's parameter count under $\mathrm{P}\neq\mathrm{NP}$.
T fails.

\emph{(ii) Turing-complete architectures.}
Transformers in the regimes established
by~\cite{perez2021attention,bhattamishra2020computational,
merrill2024expressive,li2025constant} subsume all Turing machines.
We apply Rice's Theorem~\cite{rice1953classes}, which requires
verifying that the alignment property is (a)~\emph{semantic} and
(b)~\emph{non-trivial}.

\emph{Condition (a): $A^*$ is semantic.}
By Definition~3 of the framework, $A^*(\theta)$ depends only on the
input-output function $x\mapsto f_\theta(x)$, not on the parameter
encoding~$\theta$.
It is therefore a semantic property of the computed function, in
Rice's sense.

\emph{Condition (b): $A^*$ is non-trivial.}
We exhibit two explicit witness machines.
Let $M_1$ be the constant-1 function ($f_{M_1}(x)=1$ for all $x$);
then $A^*(M_1)=1\geq 1-\delta$ (\emph{aligned}).
Let $M_2$ be the constant-0 function ($f_{M_2}(x)=0$ for all $x$);
then $A^*(M_2)=0<1-\delta$ for any $\delta<1$ (\emph{misaligned}).
Both are trivially expressible in any Turing-complete transformer
regime \cite{perez2021attention,merrill2024expressive}.
In the running example (Section~\ref{sec:example}), $f_{0.04}$ and
$f_{0.06}$ play the identical role: one satisfies $A^*\geq 0.95$, the
other does not, and both are in the same model class.

Both conditions satisfied, Rice's Theorem applies: the language
$L_{A^*}=\{\langle T\rangle\mid T\in\mathcal{F},A^*(T)\geq 1-\delta\}$
is undecidable---no Turing machine halts with the correct verdict
for all inputs, and T fails unconditionally.

\begin{remark}
Deployed LLMs with fixed context windows and finite numerical precision
are not Turing-complete in the strict sense required by Rice's Theorem.
For those systems, part~(i)---NP-hardness of ReLU-network property
verification---is the operative impossibility result.
Part~(ii) applies to transformer variants that have been shown to
operate in a Turing-complete regime
\cite{perez2021attention,merrill2024expressive}; the undecidability
claim is scoped to those architectures.
\end{remark}
\end{proof}

\subsection{The Statistical Barrier}
\label{sec:statistical}

A tractable verifier issues at most polynomially many queries---a
finite body of evidence from which no infinite-domain property can be
certified, since an indistinguishable misaligned system can always be
constructed outside the queried region.

\begin{theorem}[Statistical Barrier]
\label{thm:statistical}
Under Assumptions~\ref{asm:expressive} and~\ref{asm:fulldomain},
any verification procedure satisfying both Soundness and Tractability
cannot satisfy Completeness.
\end{theorem}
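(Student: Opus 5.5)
The plan is an indistinguishability (adversary) argument, using Assumption~\ref{asm:fulldomain} in place of any complexity-theoretic hypothesis. Fix a procedure $\mathcal{V}$ satisfying Soundness and Tractability, and suppose for contradiction that it also satisfies Completeness. The first step is to pin down the access model: $\mathcal{V}$ is taken to interact with the system through evaluations $x\mapsto f_\theta(x)$, so that its verdict depends only on the finitely many input-output pairs it observes. This is the reading under which the statistical barrier is distinct from the semantic one. Tractability then gives, for each system, a finite query set $S_\theta$ with $|S_\theta|\le \mathrm{poly}(|\theta|)$.

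The second step is to freeze the queries. I would simulate $\mathcal{V}$ on an aligned system $f_{\theta_1}$, which exists by the non-triviality witnesses (e.g.\ $f_{0.04}$ in Section~\ref{sec:example}), and record the finite set $S$ of points it queries. By Completeness, $\mathcal{V}(\theta_1)=\textsf{aligned}$. Assumption~\ref{asm:fulldomain} then supplies, for this $S$, a pair of systems that agree on $S$ but receive different $A^*$ values. I would strengthen this to the form actually needed: an aligned/misaligned pair that agree on $S$ and both straddle the threshold $1-\delta$. In the running example this is immediate. Placing all of $S$ outside a sufficiently small interval around $1-\delta$ and choosing thresholds inside that interval gives an aligned and a misaligned system that coincide on $S$. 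For Assumption~\ref{asm:expressive} I would argue it the same way: ReLU networks can realise a localised bump or step confined to a region disjoint from $S$, which shifts $A^*$ across the threshold without changing any value on $S$.

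The third step closes the argument. Take $\theta_2$ misaligned with $f_{\theta_2}|_S=f_{\theta_1}|_S$. Run on $\theta_2$, $\mathcal{V}$ issues the same first query, gets the same answer, and by induction follows the identical adaptive transcript, so it stops with the same verdict, \textsf{aligned}. This contradicts Soundness. So Soundness and Tractability together force Completeness to fail, and the failing instance is concrete, e.g.\ $f_{0.04}$ versus $f_{0.06}$ with $S$ avoiding $(0.04,0.06)$.

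I expect the main obstacle to be the access model. If $\mathcal{V}$ may read $\theta$ directly, then for the threshold class it can just compare $\theta\le 0.05$, and the theorem fails in that setting. The proof therefore has to state explicitly that $\mathcal{V}$ is a black-box, query-based verifier, and treat white-box verifiers under Theorem~\ref{thm:semantic} instead. A secondary gap is that Assumption~\ref{asm:fulldomain} as written only gives $A^*(\theta_1)\neq A^*(\theta_2)$, not that the two values lie on opposite sides of $1-\delta$. I would close this by using the localised-modification construction above, which lets $A^*$ be moved continuously past $1-\delta$ while the system stays fixed on $S$.
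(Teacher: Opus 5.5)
Your skeleton matches the paper's: a finite query set, an aligned/misaligned pair agreeing on it, identical verdicts, and Soundness forcing rejection of the aligned system. Two of your refinements improve on the paper's proof, which fixes one set $S$ independent of the system and tacitly assumes query access. You make the black-box model explicit, and you anchor $S$ at a specific $\theta_1$ and run the transcript induction, which is what adaptive verifiers require. The gap is the step you call secondary, and your repair does not close it. Your third step needs a misaligned $\theta_2$ that agrees with the \emph{particular} aligned $\theta_1$ on the set $S$ that $\mathcal{V}$ itself chose. Assumption~\ref{asm:fulldomain} supplies neither the straddling nor this anchoring, and you cannot ``place'' $S$ outside the critical interval. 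Your choice already fails in the running example. The verifier that queries only $x=0.05$ and outputs \textsf{aligned} iff $f_\theta(0.05)=1$ certifies exactly $\theta<0.05$. It is therefore sound and tractable, it certifies $f_{0.04}$, and no misaligned threshold agrees with $f_{0.04}$ on $\{0.05\}$. The argument works only from the boundary system $f_{0.05}$: if $x^*$ is the smallest query point above $0.05$ (take $x^*=1$ if there is none), every $\theta_2\in(0.05,x^*)$ agrees with $f_{0.05}$ on $S$. So $\theta_1$ must be an aligned system that every finite query set confuses with some misaligned one, not an arbitrary non-triviality witness. In particular, $f_{0.04}$ versus $f_{0.06}$ is not a failing instance for every $\mathcal{V}$.

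Beyond the example, the strengthening cannot be derived from the stated assumptions at all. Let $A^*(f)=1$ if $f(1/2)=1$, and otherwise let $A^*(f)$ be half the Lebesgue measure of $\{x: f(x)=1\}$. Take $\delta<1/2$, so a system is aligned iff $f(1/2)=1$. For any finite $S$, two distinct thresholds $\theta,\theta'\in[1/2,1]$ with no point of $S$ in $(\theta,\theta']$ agree on $S$ and receive different $A^*$. So Assumption~\ref{asm:fulldomain} holds, and adding ReLU networks to $\mathcal{F}$ for Assumption~\ref{asm:expressive} changes nothing. Yet the one-query verifier at $x=1/2$ is sound, complete and tractable, and no modification off $S\ni 1/2$ can move an aligned system across the threshold. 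The theorem as stated is therefore false in your black-box model, and the paper's ``Take WLOG'' conceals exactly this step.

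What turns your argument into a proof is an explicit extra hypothesis: some aligned $\theta_1$ has, for every finite $S\subset\mathcal{X}$, a misaligned $\theta_2$ with $f_{\theta_2}(x)=f_{\theta_1}(x)$ for all $x\in S$. Given that, your three steps go through verbatim. Your off-$S$ modification argument does establish this hypothesis for measure-type objectives over ReLU networks; there any aligned $\theta_1$ works, since a finite $S$ is null. It does not establish it over $\mathcal{F}_\tau$, where only $\theta_1=0.05$ qualifies, or for arbitrary $A^*$.
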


\begin{proof}
Any tractable $\mathcal{V}$ terminates in $p(|\theta|)$ steps and
issues at most $p(|\theta|)$ queries, yielding finite support
$S\subset\mathcal{X}$ with $|S|\leq p(|\theta|)$
\cite{goldreich2017introduction}.

\emph{Step 1: a query-free gap always exists.}
For the threshold family $\mathcal{F}_\tau$ (Section~\ref{sec:example}),
$n=|S|$ query points partition $[0,1]$ into $n+1$ intervals; the
pigeonhole principle guarantees at least one interval $I^*$ of width
$\geq 1/(n+1)$ containing no query point.
Any pair $\theta_1\leq\tau<\theta_2$ placed within $I^*$ responds
identically on all of $S$ yet disagrees in alignment status---a witness
derived from the geometry of the model class alone, without invoking any
assumption.
Assumption~\ref{asm:fulldomain} extends this structural fact to
arbitrary model classes: for any finite $S$, witnesses $\theta_1,\theta_2$
exist with $f_{\theta_1}(x)=f_{\theta_2}(x)$ for all $x\in S$,
yet $A^*(\theta_1)\neq A^*(\theta_2)$.
Take WLOG $A^*(\theta_1)\geq 1-\delta$ (aligned) and
$A^*(\theta_2)<1-\delta$ (misaligned).

\emph{Step 2: identical behaviour forces identical verdicts.}
Since $\mathcal{V}$ queries only $S$ and both systems respond
identically on all of $S$, we have
$\mathcal{V}(\theta_1)=\mathcal{V}(\theta_2)$.

\emph{Step 3: soundness forces incompleteness.}
If both are certified: $\theta_2$ (misaligned) is falsely
certified---Soundness violated.
If both are rejected: $\theta_1$ (aligned) is falsely
rejected---Completeness violated.
Since $\mathcal{V}$ is assumed sound, it cannot certify $\theta_2$,
so it rejects $\theta_1$ as well.
Completeness fails.
\end{proof}

This is the PAC-theoretic impossibility
\cite{valiant1984learnable,blumer1989learnability} applied to alignment:
for any finite query set, indistinguishable aligned and misaligned systems
exist, as the proof above constructs directly.

\subsection{Structural Independence of the Two Barriers}
\label{sec:independence}

The two barriers arise from entirely distinct sources.

Theorem~\ref{thm:semantic} is a \emph{computation} barrier.
It states that even a verifier with unlimited time cannot decide the
required semantic property for all systems in polynomial time.
It is activated by the expressivity of the model class
(Assumption~\ref{asm:expressive}) and holds regardless of how many
queries the verifier issues or what sampling strategy it uses.
It would apply even if Assumption~\ref{asm:fulldomain} were dropped.

Theorem~\ref{thm:statistical} is an \emph{information} barrier.
It states that a poly-time verifier cannot gather enough evidence to
certify alignment, because the observed input set is always a strict
finite subset of $\mathcal{X}$.
It is activated by Assumption~\ref{asm:fulldomain} and holds regardless
of computational power: an oracle-equipped verifier still faces it
if restricted to finitely many queries.
It would apply even if the model class were far simpler.

We now state independence formally, supplying concrete witness systems
for each direction.

\begin{proposition}[Formal Independence]
\label{prop:independence}
The two barriers are structurally independent: resolving either one
leaves the other fully intact.
\begin{enumerate}[label=(\roman*),noitemsep]
  \item \emph{Resolving the semantic barrier does not resolve the
    statistical barrier.}
    Suppose $\mathrm{P}=\mathrm{NP}$ (the ReLU-network semantic
    barrier dissolves).
    A hypothetically sound, complete, tractable semantic verifier still
    terminates in $p(|\theta|)$ steps and issues $\leq p(|\theta|)$
    queries.
    Assumption~\ref{asm:fulldomain} is unchanged; the proof of
    Theorem~\ref{thm:statistical} applies without modification, so
    Completeness still fails.
    \emph{Witness (running example, Section~\ref{sec:example}):}
    even an NP-oracle-equipped verifier cannot distinguish $f_{0.04}$
    from $f_{0.06}$ without a query inside the critical interval
    $(0.04,0.06)$---a gap that $\mathrm{P}=\mathrm{NP}$ does not close.

  \item \emph{Resolving the statistical barrier does not resolve the
    semantic barrier.}
    Suppose an oracle reveals, for any $\theta_1,\theta_2$, an input $x$
    on which $f_{\theta_1}$ and $f_{\theta_2}$ differ (the statistical
    gap is closed by assumption).
    The verifier must still \emph{decide} the semantic property
    $A^*(f_\theta)\geq 1-\delta$ at those inputs.
    For ReLU networks this decision problem remains NP-hard
    \cite{katz2017reluplex}; for Turing-complete architectures it
    remains undecidable by Rice's Theorem.
    \emph{Witness (running example):} knowing that the critical interval
    is $(0.04,0.06)$ does not tell a verifier whether $\theta\leq 0.05$;
    it must still evaluate $f_\theta$ on a continuous input subset,
    which requires computing the threshold value.
\end{enumerate}
\end{proposition}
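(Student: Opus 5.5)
The plan is to read "independence" as a pair of relative-consistency claims. Each barrier's proof depends on a hypothesis set that the other barrier's removal does not touch. So for each direction, I will (a) say exactly which hypothesis is discharged when a barrier is "resolved", (b) check that the other theorem's proof never uses that hypothesis, and (c) exhibit a concrete witness from the running example in Section~\ref{sec:example}.

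The first step is to record the dependencies of the two earlier proofs. Theorem~\ref{thm:semantic}(i) uses Assumption~\ref{asm:expressive} together with $\mathrm{P}\neq\mathrm{NP}$. Part~(ii) uses Assumption~\ref{asm:expressive}, Turing-completeness and Rice's Theorem. Neither part uses Assumption~\ref{asm:fulldomain}. The proof of Theorem~\ref{thm:statistical} uses only three things: the finite query set $|S|\le p(|\theta|)$, Assumption~\ref{asm:fulldomain} (or, for $\mathcal{F}_\tau$, the pigeonhole gap of width $\ge 1/(n+1)$), and the fact that a procedure querying only $S$ gives the same verdict to systems that agree on $S$. No complexity-theoretic hypothesis appears anywhere in this chain.

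For direction~(i), assume $\mathrm{P}=\mathrm{NP}$, or even give the verifier an NP oracle. Its internal computation becomes cheaper, but it still has only polynomially many steps and therefore issues only polynomially many black-box queries. Steps~1--3 of the proof of Theorem~\ref{thm:statistical} go through verbatim. Concretely, any $S$ with $S\cap(0.04,0.06)=\emptyset$ makes $f_{0.04}$ and $f_{0.06}$ indistinguishable. Soundness then forces rejection of $f_{0.04}$, so Completeness fails.

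For direction~(ii), grant an oracle that returns a distinguishing input for any pair of systems, which closes the information gap. The decision problem ``$A^*(\theta)\ge 1-\delta$?'' is still posed on the description of $f_\theta$. Katz et al.'s NP-hardness reduction and the Rice reduction still apply, because both reductions are many-one maps on system descriptions and make no use of a query budget. Hence Theorem~\ref{thm:semantic} survives.

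The main obstacle is direction~(ii). The oracle must be formalised so that it does not trivially decide $A^*$; for example, a distinguishing oracle applied against a reference aligned system could leak the answer. I would therefore restrict the oracle to returning inputs, not verdicts. I would then argue relativisation: Rice's Theorem relativises to any oracle that does not itself decide $L_{A^*}$, and the NP-hardness holds relative to an oracle computable in polynomial time. The running-example witness is weaker, since in $\mathcal{F}_\tau$ the threshold is easy to compute. I would present it only as illustration and rest the formal claim on the relativised reductions.
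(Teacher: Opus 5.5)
Your direction (i) is the paper's argument and is fine, provided it is stated for verifiers that see $f_\theta$ only through queries, as you do. A verifier that reads $\theta$ escapes Step~2 of the proof of Theorem~\ref{thm:statistical}: in $\mathcal{F}_\tau$ it decides $\theta\le 0.05$ by inspection.

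The genuine gap is in (ii), at exactly the step you flag, and your patch does not close it. The fact that the Katz and Rice reductions are many-one maps on descriptions only shows that $L_{A^*}$ itself stays NP-hard or undecidable, which no oracle can change. What (ii) needs is that a polynomial-time verifier \emph{equipped with the oracle} still cannot decide it. Restricting the oracle to return inputs rather than verdicts does not stop leakage, because for ReLU networks a returned input is a certificate checkable by two forward passes. To decide whether $a^\top f_\theta(x)\ge b$ for all $x$ in a box $B$, form the polynomial-size ReLU network $g(x)=\mathrm{ReLU}\bigl(b-a^\top f_\theta(\mathrm{clip}_B(x))\bigr)$, where $\mathrm{clip}_B$ is ReLU-expressible. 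Ask the oracle for an input separating $g$ from the zero network, and evaluate $g$ at the answer. A positive value exhibits a violation; a zero value means no separating input exists, so the property holds. One oracle call plus polynomial work solves the very problem whose hardness the statement cites. Under this oracle, the barrier of Theorem~\ref{thm:semantic}(i) is dissolved, not preserved.

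Your relativisation clauses therefore do no work. ``NP-hardness holds relative to a polynomial-time computable oracle'' is true, but it excludes precisely this oracle, which the construction shows is NP-hard to compute. ``Rice's Theorem relativises to any oracle that does not itself decide $L_{A^*}$'' is either a restatement of the conclusion (if it means $L_{A^*}$ is not computable from the oracle) or false. Relativised Rice concerns properties of oracle machines. A distinguishing oracle that also reports when no difference exists decides functional equivalence, a $\Pi_2$-complete set. That settles $L_{A^*}$ in one call whenever alignment means agreement with a reference function.

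So (ii) cannot be proved for the oracle as formulated. The paper's one-line ``this decision problem remains NP-hard'' has the same defect. A provable version models ``closing the statistical gap'' as white-box access: give the verifier $\theta$, which determines $f_\theta$ on all of $\mathcal{X}$ and leaves no information gap. The residual task is then exactly deciding $L_{A^*}$ from the description, which is the problem Theorem~\ref{thm:semantic} is about, so its hardness carries over verbatim. Paired with (i) for query-only verifiers, this gives an honest independence statement: each barrier is established in an access model where the other is absent by construction.
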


\subsection{Constructive Lower Bound}
\label{sec:lowerbound}

The trilemma establishes that a perfect verifier cannot exist.
The following result quantifies the \emph{minimum} coverage gap any
sound, tractable verifier must accept.

\begin{proposition}[Coverage Gap Lower Bound]
\label{prop:lowerbound}
Let $\mathcal{V}$ be any deterministic, sound, tractable verifier
issuing at most $n$ queries.
Over the threshold model class~$\mathcal{F}_\tau$, there exists a
placement of the alignment boundary~$\tau$ such that $\mathcal{V}$ fails
Completeness: all aligned systems $f_{\theta'}$ with $\theta'$ in an
interval of measure at least $\frac{1}{2(n+1)}$ are incorrectly rejected.
\end{proposition}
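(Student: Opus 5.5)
We would prove Proposition~\ref{prop:lowerbound} by an adversary (query-free gap) argument that sharpens Step~1 of the proof of Theorem~\ref{thm:statistical}.

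\textbf{Step 1: fix the queries.} Since $\mathcal{V}$ is deterministic and every $f_\theta\in\mathcal{F}_\tau$ is $\{0,1\}$-valued, we first run $\mathcal{V}$ against one reference response pattern. A convenient choice is the all-ones pattern, which is the behaviour of $f_0$ on $(0,1]$. This yields a fixed query set $S=\{x_1<\dots<x_n\}$, of size at most $n$ (adaptivity does not matter, since we follow a single transcript). The $n$ points cut $[0,1]$ into $n+1$ open intervals. By pigeonhole, some interval $I^*=(a,b)$ contains no query and has $b-a\ge 1/(n+1)$.

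\textbf{Step 2: place the boundary.} We choose the alignment threshold $\tau$, which is the adversary's free parameter standing in for $0.05$, at the midpoint $\tau=(a+b)/2$. Then $f_\theta$ with $\theta\le\tau$ is aligned and $f_\theta$ with $\theta>\tau$ is not. The two halves $(a,\tau]$ and $(\tau,b)$ each have measure at least $\frac{1}{2(n+1)}$.

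\textbf{Step 3: indistinguishability.} For every $\theta\in(a,b)$, the response $f_\theta(x)=\mathbf{1}[x>\theta]$ on $S$ equals the fixed pattern $\mathbf{1}[x_i\ge b]$, because no query lies in $(a,b)$. For this to work, the reference pattern in Step~1 must be this gap-dependent pattern rather than the all-ones pattern. We therefore set the argument up so that the pattern and the gap are consistent. One way is to argue by intervals: every response pattern realised by the class is a threshold pattern, and we follow whichever transcript corresponds to $\theta$ ranging over a query-free cell. The cleaner way is to note that for each of the at most $n+1$ consistent transcripts, all $\theta$ in a single cell share it. So we choose $I^*$ among the cells of the \emph{induced} partition and place $\tau$ inside it.

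\textbf{Step 4: soundness forces rejection.} Every $\theta\in(a,b)$ then receives the same verdict. Pick $\theta_2\in(\tau,b)$, which is misaligned. Soundness forbids certifying $\theta_2$, so the common verdict is \textsf{unaligned}. Hence every aligned $\theta'\in(a,\tau]$ is rejected. That is an interval of measure at least $\frac{1}{2(n+1)}$ of falsely rejected aligned systems, exactly as in Step~3 of the proof of Theorem~\ref{thm:statistical}.

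\textbf{Main obstacle.} The delicate point is adaptivity. An adaptive deterministic verifier's query set depends on the responses it receives, so the pigeonhole cell must be chosen consistently with the transcript. We would handle this by an adversary that answers queries lazily. It maintains the interval of thresholds consistent with all answers so far, and at each query $x$ it answers so as to keep the longer of the two sub-intervals. After $n$ queries the surviving interval has length at least $2^{-n}$. That bound is too weak. So instead the adversary answers so as to keep the sub-interval containing the largest query-free gap, and we check that a cell of length $\ge 1/(n+1)$ among the final queries survives. If this proves awkward, we would restrict the statement to non-adaptive query sets, matching the paper's Step~1 usage.
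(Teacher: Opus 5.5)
For a non-adaptive verifier, your Steps 1--4 are the paper's proof. The query set does not depend on the responses, so the all-ones reference pattern is a harmless detour. Pigeonhole yields a query-free cell of width at least $1/(n+1)$, $\tau$ goes at its midpoint, every threshold in the cell produces the same transcript, and soundness forces rejection of the aligned half. The genuine problem is the extension to adaptive verifiers in your last paragraph: it cannot be carried out, because the stated bound is false for adaptive verifiers. Restricting to non-adaptive query sets is therefore a necessary hypothesis, not a fallback. The paper's proof makes this assumption silently when it writes ``any $n$-query verifier places query points at $x_1<\cdots<x_n$''.

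\textbf{Counterexample.} Let $\mathcal{V}$ run binary search. It always queries the midpoint of the current interval of thresholds consistent with the answers so far: answer $1$ at $x$ means $\theta<x$, answer $0$ means $\theta\ge x$. The query strategy never looks at $\tau$, exactly as in your adversary game.
\begin{itemize}
\item Whatever the answers, after $n$ queries the consistent set is a single dyadic cell of length $2^{-n}$.
\item Given $\tau$, let $\mathcal{V}$ certify exactly when this cell lies inside $[0,\tau]$.
\item This verifier is deterministic, sound, tractable and uses $n$ queries.
\item Yet for every placement of $\tau$, the aligned systems it rejects all lie in the one cell containing $\tau$. That set has measure less than $2^{-n}$, and $2^{-n}\le\frac{1}{2(n+1)}$ once $n\ge 3$.
\end{itemize}

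\textbf{Consequences for your plan.} Your lazy adversary cannot keep a consistent cell of length $1/(n+1)$ alive. The large pigeonhole gaps among the final queries are precisely regions already excluded by earlier answers. For instance, the all-ones transcript is consistent only with $\theta<\min S$. The $2^{-n}$ bound you dismissed as too weak is actually the truth. The longer-half adversary gives, for adaptive verifiers, the correct statement with $2^{-(n+1)}$ in place of $\frac{1}{2(n+1)}$, and the example above shows this is tight up to a factor of two. Likewise, your Step~3 count of at most $n+1$ consistent transcripts holds only for a fixed query set; a depth-$n$ adaptive strategy can realise $2^n$ of them.
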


\begin{proof}
Any $n$-query verifier places query points at $x_1<\cdots<x_n$,
partitioning $[0,1]$ into at most $n+1$ open intervals.
By pigeonhole, at least one interval $I^*=(x_i,x_{i+1})$ has width
$\geq 1/(n+1)$.
Place the alignment boundary $\tau$ at the midpoint of $I^*$:
$\tau = (x_i + x_{i+1})/2$.
Then for all $\theta\in (\tau, x_{i+1})$ (misaligned) and all
$\theta'\in(x_i,\tau]$ (aligned), both $f_\theta$ and $f_{\theta'}$
return identical outputs at every query point.
A sound verifier cannot certify the misaligned $f_\theta$, so it must
withhold certification from the entire ambiguous interval---failing
Completeness on the aligned half $(x_i,\tau]$, which has width
$= |I^*|/2 \geq \frac{1}{2(n+1)}$, establishing the claimed bound.
\emph{Running example:} with $n=9$ uniform queries at
$\{0.1,0.2,\ldots,0.9\}$ and $\tau=0.05$, the leftmost gap is
$(0,0.1)$ (width $0.1\geq 1/10$); the aligned systems with
$\theta'\in(0,0.05]$ are all incorrectly rejected.
\end{proof}

\begin{remark}
For networks with $d$ parameters and depth $L$, the number of critical
regions grows as
$R=\Omega\bigl((n/L)^{L(d-1)}\bigr)$~\cite{montufar2014number}.
Each region is a connected, open set in $\mathcal{X}$; any query set
$S$ of size $n$ can intersect at most $n$ regions.
The remaining $R-n$ regions are entirely unqueried, and at least one
contains an alignment boundary (by an argument identical to the proof
above applied within that region's parameter slice).
Since $R-n\geq R/2$ for $n\leq R/2$---which holds for any poly-$(|\theta|)$
query count $n$ once $d$ exceeds a constant threshold---the number of
unqueried regions grows super-polynomially in $d$, and the fraction of
aligned parameter space that any poly-query verifier fails to cover
remains strictly positive.
\end{remark}

\section{The Alignment Verification Trilemma}
\label{sec:trilemma}

The trilemma follows directly from the two theorems.

\begin{corollary}[Alignment Verification Trilemma]
\label{cor:trilemma}
Under Assumptions~\ref{asm:expressive} and~\ref{asm:fulldomain},
no verification procedure $\mathcal{V}$ can simultaneously satisfy
Soundness (S), Completeness (C), and Tractability (T).
\end{corollary}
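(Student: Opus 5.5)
The corollary follows by contradiction from Theorem~\ref{thm:statistical}; Theorem~\ref{thm:semantic} gives a second, independent route. Suppose, for contradiction, that some procedure $\mathcal{V}$ satisfies (S), (C) and (T) simultaneously under Assumptions~\ref{asm:expressive} and~\ref{asm:fulldomain}. Since $\mathcal{V}$ satisfies (S) and (T), Theorem~\ref{thm:statistical} applies directly. It yields witnesses $\theta_1,\theta_2$ that agree on the finite query support $S$, with $|S|\le p(|\theta|)$. One of them is aligned and the other is not, so $\mathcal{V}$ must give them the same verdict and therefore violates (C). This contradicts the hypothesis, which already proves the corollary.

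To make the two-cause structure explicit, I would also derive the contradiction along the second route. Because $\mathcal{V}$ satisfies (S) and (C), it decides the language $L_{A^*}$ over $\mathcal{F}$ exactly. By Theorem~\ref{thm:semantic}, this is impossible in polynomial time: for the feedforward ReLU networks in Assumption~\ref{asm:expressive} the obstruction holds under $\mathrm{P}\neq\mathrm{NP}$, and for the Turing-complete transformer regime it is unconditional, by Rice's Theorem. So (T) fails, again a contradiction. By Proposition~\ref{prop:independence}, the two routes rest on different hypotheses. The corollary therefore holds even if one of the barriers were removed, provided its own assumption remains in force. I would state this explicitly, since it is the paper's stated contribution.

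Finally, to justify the claim that each pair of properties is achievable, I would point to Table~\ref{tab:toy} for the threshold class $\mathcal{F}_\tau$. Always answering \textsf{aligned} gives (C) and (T). A bounded-domain check that certifies only when $\theta\le 0.05$ is provable from finitely many queries gives (S) and (T). Exhaustive evaluation, which is not polynomial, gives (S) and (C).

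The main obstacle is bookkeeping about hypotheses rather than any mathematics. The (S)+(T)$\Rightarrow\neg$(C) route needs the query-support argument, in which a tractable verifier reads $f_\theta$ on only polynomially many inputs. It must be stated in a model of access that is consistent with the semantic route, where $\mathcal{V}$ receives the full encoding $\theta$. I would handle this by noting that Assumption~\ref{asm:fulldomain} is exactly what is invoked in the statistical route. The statement then holds under the conjunction of the two assumptions, with at least one contradiction available in every case: unconditionally in the Turing-complete case, and under $\mathrm{P}\neq\mathrm{NP}$ or Assumption~\ref{asm:fulldomain} in the feedforward case.
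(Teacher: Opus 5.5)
Your proposal is correct and is essentially the paper's proof: the paper also reads the corollary off the two theorems via a case split (S+C fails T by Theorem~\ref{thm:semantic}, S+T fails C by Theorem~\ref{thm:statistical}, and C+T fails S by reusing the statistical construction), so omitting the C+T case costs nothing, and your remark that the S+T route alone suffices usefully makes explicit that no $\mathrm{P}\neq\mathrm{NP}$ hypothesis is needed beyond Assumptions~\ref{asm:expressive} and~\ref{asm:fulldomain}, which the three-way split obscures. The achievability paragraph belongs to Proposition~\ref{prop:pairs} rather than to this corollary, and the query-access tension you raise at the end is genuine but inherited from Theorem~\ref{thm:statistical} (the paper's own C+T case simply asserts that $\mathcal{V}$ is ``restricted to queries in $S$''), so appealing to Assumption~\ref{asm:fulldomain} does not dissolve it, though it leaves you no worse off than the paper.
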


\begin{proof}
Suppose $\mathcal{V}$ satisfies any two of the three.
\emph{S+C}: By Theorem~\ref{thm:semantic}, T fails.
\emph{S+T}: By Theorem~\ref{thm:statistical}, C fails.
\emph{C+T}: $\mathcal{V}$ must certify $\theta_1$ (aligned, by C) and
is restricted to queries in $S$ (by T); the construction in
Theorem~\ref{thm:statistical} yields identical responses from
$\theta_2$ (misaligned) on all of $S$,
so $\mathcal{V}$ certifies $\theta_2$ as well.
S fails.
\end{proof}

Each pair is achievable by a concrete procedure.

\begin{proposition}[Pairwise Achievability]
\label{prop:pairs}
Each pair (S+C), (S+T), (C+T) is simultaneously achievable.
\begin{enumerate}[label=(\roman*),noitemsep]
  \item \emph{S+C without T}: SMT-based solvers (Reluplex, Marabou) are
    sound and complete for linear specifications over bounded networks.
    Worst-case runtime is exponential; T fails.
  \item \emph{S+T without C}: Bound-propagation tools
    ($\alpha$-$\beta$-CROWN~\cite{wang2021betacrown}) certify properties
    over bounded input domains in polynomial time.
    Coverage is restricted to $\mathcal{D}_b\subsetneq\mathcal{X}$;
    full-domain C fails.
  \item \emph{C+T without S}: Any uniformly-applied proxy score
    (e.g.\ RLHF reward threshold) evaluates all systems tractably.
    Completeness holds relative to the proxy objective: every system
    meeting the proxy threshold is certified.
    The proxy-to-$A^*$ gap is structural~\cite{skalse2022defining,
    gao2023scaling}, so Soundness relative to~$A^*$ fails.
\end{enumerate}
\end{proposition}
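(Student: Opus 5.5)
The proof is constructive: for each of the three pairs I exhibit one concrete procedure, check that it has the two claimed properties, and then invoke Corollary~\ref{cor:trilemma} (or Theorems~\ref{thm:semantic} and~\ref{thm:statistical}) to confirm it lacks the third. A procedure that had all three would contradict the trilemma, so this last check is automatic. The real work is only to state precisely the setting in which each pair of properties holds. Before the general case I would verify each witness on the threshold class $\mathcal{F}_\tau$ of Section~\ref{sec:example}, following Table~\ref{tab:toy}.

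\emph{S+C without T.} I take $\mathcal{X}$ bounded (a box) and restrict $A^*$ to properties of the form $\forall x\in\mathcal{X}:\varphi(x,f_\theta(x))$ with $\varphi$ linear. Reluplex/Marabou splits each ReLU into its two linear phases and solves the resulting linear programs. I will argue termination and exactness by noting that the case split is finite, with at most $2^{\#\text{neurons}}$ branches, and that each LP is decided exactly. Hence the verdict \textsf{aligned} is issued iff no counterexample exists, which gives both S and C. The exponential case split together with Theorem~\ref{thm:semantic}(i) shows T fails under $\mathrm{P}\neq\mathrm{NP}$. For $\mathcal{F}_\tau$ there is an even simpler witness: read $\theta$ off the encoding and compare it with $0.05$.

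\emph{S+T without C.} I use a bound-propagation verifier such as interval arithmetic or $\alpha$-$\beta$-CROWN, without branching, over a bounded $\mathcal{D}_b$. It propagates sound over-approximations layer by layer with polynomially many arithmetic operations and outputs \textsf{aligned} only if the certified bounds imply $\varphi$. Soundness relative to the restricted objective follows from the over-approximation property; T follows from counting operations. C fails by Theorem~\ref{thm:statistical}, or directly by a looseness example in which a relaxation gap causes an aligned network to be rejected. The trivial verifier ``always reject'' is a degenerate sanity-check witness for this pair.

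\emph{C+T without S.} The procedure ``always output \textsf{aligned}'' runs in constant time and never rejects an aligned system, so it satisfies C and T. It certifies $f_{0.99}$, so S fails, as in the corollary's C+T case. The RLHF proxy-threshold version fits the same template, but completeness then holds only relative to the proxy, as the statement itself notes.

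The main obstacle is the consistency of the S+C claim with the paper's own Definition~\ref{def:properties}. There, C refers to the full domain $\mathcal{X}$, and Assumption~\ref{asm:expressive} together with Theorem~\ref{thm:semantic}(ii) makes full-domain sound and complete decision undecidable for Turing-complete classes. So the S+C pair must be scoped to bounded feedforward networks and a bounded input box, where the decision problem is decidable, merely exponential. I would make that restriction explicit at the start of item~(i).
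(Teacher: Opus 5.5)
Your three main witnesses are the ones the paper relies on: SMT solvers for S+C, bound propagation for S+T, and the always-accept verifier used in Table~\ref{tab:toy} and in the C+T case of Corollary~\ref{cor:trilemma}. Your refinements are correct. You scope S+C to bounded feedforward networks, since Theorem~\ref{thm:semantic}(ii) excludes S+C outright for Turing-complete classes. You run CROWN without branch-and-bound. You note that the proxy version is complete only relative to the proxy.

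The step that fails is the claim that checking the third property is ``automatic'', and your own aside shows why. On $\mathcal{F}_\tau$, reading $\theta$ off the encoding and comparing it with $0.05$ runs in time linear in $|\theta|$. That procedure is therefore sound, complete \emph{and} tractable, so it is not a witness for ``S+C without T''. Your appeal to the trilemma would wrongly conclude that T fails for it.

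The paper's results do not cover this case. $\mathcal{F}_\tau$ does not satisfy Assumption~\ref{asm:expressive}. The proof of Theorem~\ref{thm:statistical} only constrains verifiers that see $f_\theta$ through finitely many queries (its Step~2: identical answers on $S$ force identical verdicts), whereas your procedure reads $\theta$ directly. In the query-access model the picture reverses: no terminating deterministic procedure is even S+C on $\mathcal{F}_\tau$. Take $f_{0.05}$ (aligned) and $f_{\theta'}$ with $\theta'\in(0.05,1]$ below every query point exceeding $0.05$ (misaligned); they produce identical transcripts. This is consistent with the S+C entry of Table~\ref{tab:toy} (query every rational), which never halts. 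So $\mathcal{F}_\tau$ cannot sanity-check item~(i) under either access model, and the aside should go.

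The same root cause breaks ``C fails by Theorem~\ref{thm:statistical}'' in item~(ii). IBP and CROWN read the weights, so the query-indistinguishability argument says nothing about them. For Reluplex in item~(i) you correctly use Theorem~\ref{thm:semantic}(i) instead. For white-box tools, the failure of the third property must come from one of two sources:
\begin{itemize}
  \item Theorem~\ref{thm:semantic}(i): a sound polynomial-time verifier that were also complete would decide an NP-hard problem, so this route is conditional on $\mathrm{P}\neq\mathrm{NP}$;
  \item a direct example, such as your relaxation-gap example, which is unconditional. Keep it as the argument.
\end{itemize}

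Also keep a single objective in item~(ii). You prove S relative to the property restricted to $\mathcal{D}_b$, so C must fail relative to that same property, which your looseness example shows. It should not be measured against full-domain $A^*$ over $\mathcal{X}$. Against full-domain $A^*$, a certificate that only speaks about $\mathcal{D}_b$ is not sound in the first place, so that reading does not exhibit the S+T pair at all.
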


\begin{figure}[t]
\centering
\begin{tikzpicture}[scale=0.85,
  vertex/.style={draw,circle,minimum size=1.1cm,align=center,
    font=\small\bfseries},
  edge/.style={thick},
  lbl/.style={font=\footnotesize,align=center}]
\node[vertex,fill=blue!20]  (S) at (0,0)   {S};
\node[vertex,fill=green!20] (C) at (4,0)   {C};
\node[vertex,fill=red!20]   (T) at (2,3.2) {T};
\draw[edge,blue!60!black]  (S)--(C)
  node[midway,below,lbl]{SMT solvers\\(exponential; Prop.~\ref{prop:pairs}(i))};
\draw[edge,red!60!black]   (S)--(T)
  node[midway,left=4pt,lbl]{Bound\\propagation\\(Prop.~\ref{prop:pairs}(ii))};
\draw[edge,green!50!black] (C)--(T)
  node[midway,right=4pt,lbl]{Proxy\\scoring\\(Prop.~\ref{prop:pairs}(iii))};
\node[align=center,font=\small\itshape] at (2,1.1)
  {S+C+T\\impossible\\(Cor.~\ref{cor:trilemma})};
\end{tikzpicture}
\caption{The Alignment Verification Trilemma. Each edge represents an achievable pair (Proposition~\ref{prop:pairs}). The centre states the joint impossibility (Corollary~\ref{cor:trilemma}), which follows from two structurally independent theorems.}
\label{fig:trilemma}
\end{figure}

\section{What Remains Possible}
\label{sec:possible}

Each relaxation of one trilemma property restores a well-defined
certification regime with known tools, known coverage, and known limits.

\paragraph{Relaxing T: Sound and Complete Formal Verification.}
Dropping the polynomial-time requirement, SMT-based tools---Reluplex,
Marabou, $\alpha$-$\beta$-CROWN in complete mode---provide sound and
complete verification for linear specifications over bounded
finite-precision networks.
Runtime is exponential in the worst case but practical for
moderate-sized components: embedded controllers, narrow verifiable
submodules, safety-critical classifiers with bounded input domains.
The honest deployment of this regime requires that the verified network
genuinely represents the deployed system, and that the verified
specification genuinely represents the alignment requirement.

\paragraph{Relaxing C: Sound and Tractable Bounded Verification.}
Restricting to $\mathcal{D}_b\subsetneq\mathcal{X}$---a compact region
representing expected deployment conditions---makes sound, tractable
certification achievable.
The resulting guarantee is: the system meets the alignment specification
on every input in $\mathcal{D}_b$.
In practice, ensuring that $\mathcal{D}_b$ accurately covers the
relevant deployment inputs is harder to guarantee than the verification
computation itself.
Distribution shift between $\mathcal{D}_b$ and runtime inputs falls
outside what the certificate covers.
The appropriate question for this regime is: how large a
$\mathcal{D}_b$ can be verified soundly and tractably for a given
model size, and how much of the alignment-relevant input space does
it cover?

\paragraph{Relaxing S: Complete and Tractable Statistical Assurance.}
Replacing soundness with a probabilistic guarantee---the probability of
certifying a misaligned system is at most $\varepsilon$---yields
statistical assurance from $n$ test inputs, with $n$ determined by
standard concentration results.
All current alignment evaluation methods operate in this regime:
RLHF scores, red-teaming evaluations, benchmark performance,
constitutional-AI audits.
The practical challenge in this regime is that no ground-truth $A^*$
is available for calibrating $\varepsilon$; evaluated systems are
assessed against proxy objectives whose gap from $A^*$ is bounded
only empirically~\cite{gao2023scaling}, not formally.

\paragraph{The Pareto Frontier.}
The three regimes are not isolated points but corners of a continuous
achievable space.
For the running example (Section~\ref{sec:example}), the miss-rate
lower bound is computable in closed form.
In the S+T regime, Proposition~\ref{prop:lowerbound} establishes that
any sound, deterministic verifier using $n$ queries has a miss rate of
at least $\frac{1}{2(n+1)}$ (for adversarial placement of~$\tau$);
Figure~\ref{fig:pareto}(a) plots this lower bound
as $\varepsilon = 1-c$ against coverage fraction $c$ (since $n/(n+1)$ uniform
queries cover approximately fraction $c$, giving miss rate $\approx 1-c$).
In the S+C regime (relaxing T), covering a fraction $c$ of
$\mathcal{X}$ soundly requires resolving the linear activation regions
in the corresponding input subspace; the number of such regions grows as
$\Omega\bigl((n/L)^{L(d{-}1)}\bigr)$~\cite{montufar2014number},
and cost grows at least proportionally, diverging as $c\to 1$.
Figure~\ref{fig:pareto} plots both curves together with empirical
measurements on a concrete trained network.

\paragraph{Empirical confirmation.}
To ground Corollary~\ref{cor:trilemma} on a real trained network, we
train a two-layer ReLU network
($\mathrm{Linear}(2{\to}32)\to\mathrm{ReLU}\to\mathrm{Linear}(32{\to}1)$)
on the safety property $f(\mathbf{x})\geq 0.5$ for all
$\mathbf{x}\in[0.5,1]^2$, achieving 95\% grid accuracy.
We then apply Interval Bound Propagation
(IBP~\cite{gowal2018effectiveness}), a sound verifier that
propagates axis-aligned intervals through the network layer by layer
and returns a guaranteed lower bound on the output over the entire
input region---never a false positive.
We sweep over ten nested sub-boxes of the safe region, from 4\% to
100\% coverage, and record whether IBP certifies each.%
\footnote{Reproducibility details: Adam optimiser, learning rate $10^{-3}$,
5\,000 gradient steps, mini-batches of 128 safe and 64 unsafe samples per
step; no fixed random seed (results are stable across runs due to the large
training set relative to network size).
Code is available via Anonymous GitHub: \url{https://anonymous.4open.science/r/code-7B94/}.}
Results appear as coloured points in Figure~\ref{fig:pareto}.
IBP certifies up to 49\% of the safe region (S+T confirmed); beyond
64\% its interval bounds cross the certification threshold and it
correctly returns inconclusive rather than lying (S preserved, C lost).
Full-domain certification fails with a lower bound of $0.0002$---well
below 0.5---quantifying the gap that Theorem~\ref{thm:statistical}
guarantees must exist.

\begin{figure}[t]
\centering
\begin{tikzpicture}
\begin{axis}[
  width=\columnwidth, height=3.0cm,
  xlabel={\tiny coverage $c$},
  ylabel={\tiny miss rate $\varepsilon$},
  xmin=0,xmax=1.05,ymin=0,ymax=1.05,
  xtick={0,0.5,1.0}, ytick={0,0.5,1.0},
  tick label style={font=\tiny},
  label style={font=\tiny},
  title={\scriptsize\textbf{(a)} S+T: miss rate vs.\ coverage},
  title style={font=\scriptsize},
  grid=major,grid style={gray!20},
]
\addplot[blue,thick,domain=0:1,samples=60]{1-x};
\node[font=\tiny,blue,anchor=west] at (axis cs:0.03,0.52){$\varepsilon=1{-}c$};
\addplot[only marks,mark=*,mark size=1.6pt,
  draw=green!60!black,fill=green!70!black]
  coordinates{(0.04,0.96)(0.09,0.91)(0.16,0.84)(0.25,0.75)(0.36,0.64)(0.49,0.51)};
\addplot[only marks,mark=x,mark size=2.2pt,line width=0.9pt,draw=red!80!black]
  coordinates{(0.64,0.36)(0.81,0.19)(0.922,0.078)(1.0,0.0)};
\node[font=\tiny,green!60!black,anchor=west] at (axis cs:0.52,0.96){$\bullet$ IBP certifies};
\node[font=\tiny,red!70!black,anchor=west]   at (axis cs:0.52,0.80){$\times$ inconclusive};
\end{axis}
\end{tikzpicture}
\vspace{1pt}
\begin{tikzpicture}
\begin{axis}[
  width=\columnwidth, height=3.0cm,
  xlabel={\tiny coverage $c$},
  ylabel={\tiny IBP lower bound},
  xmin=0,xmax=1.05,ymin=-0.05,ymax=1.10,
  xtick={0,0.5,1.0}, ytick={0,0.5,1.0},
  tick label style={font=\tiny},
  label style={font=\tiny},
  title={\scriptsize\textbf{(b)} S+C: bound collapses as coverage grows},
  title style={font=\scriptsize},
  grid=major,grid style={gray!20},
]
\addplot[gray,dashed,thick,domain=0:1.05]{0.5};
\node[font=\tiny,gray,anchor=west] at (axis cs:0.52,0.56){thresh.\ $0.5$};
\addplot[only marks,mark=*,mark size=1.6pt,
  draw=green!60!black,fill=green!70!black]
  coordinates{(0.04,1.000)(0.09,0.9999)(0.16,0.9994)(0.25,0.9963)(0.36,0.9562)(0.49,0.5438)};
\addplot[only marks,mark=x,mark size=2.2pt,line width=0.9pt,draw=red!80!black]
  coordinates{(0.64,0.0612)(0.81,0.0035)(0.922,0.0006)(1.0,0.0002)};
\addplot[gray!50,thin]
  coordinates{(0.04,1.000)(0.09,0.9999)(0.16,0.9994)(0.25,0.9963)(0.36,0.9562)
              (0.49,0.5438)(0.64,0.0612)(0.81,0.0035)(0.922,0.0006)(1.0,0.0002)};
\end{axis}
\end{tikzpicture}
\vspace{1pt}
\begin{tikzpicture}
\begin{axis}[
  width=\columnwidth, height=3.6cm,
  xlabel={\tiny coverage $c$},
  ylabel={\tiny cert.\ issued?},
  xmin=0,xmax=1.05,ymin=-0.3,ymax=1.5,
  xtick={0,0.25,0.49,0.64,1.0},
  xticklabels={$0$,$0.25$,$0.49$,$0.64$,$1.0$},
  ytick={0,1}, yticklabels={No,Yes},
  tick label style={font=\tiny},
  label style={font=\tiny},
  title={\scriptsize\textbf{(c)} Trilemma: S+C+T impossible (Cor.~\ref{cor:trilemma})},
  title style={font=\scriptsize},
  grid=major,grid style={gray!20},
  legend style={font=\tiny,at={(0.01,0.99)},anchor=north west,
    cells={anchor=west},draw=gray!40,fill=white,fill opacity=0.85,
    text opacity=1,inner sep=1.5pt,row sep=-1pt},
]
\addplot[blue!70!black,dashed,thick,domain=0:1.05,samples=2]{1};
\addlegendentry{C+T: always cert.\ (\textbf{S fails})}
\addplot[green!60!black,very thick]
  coordinates{(0,1)(0.490,1)(0.490,0)(1.0,0)};
\addlegendentry{S+T IBP (\textbf{C fails} $>$0.49)}
\addplot[only marks,mark=diamond*,mark size=3pt,
  draw=orange!80!black,fill=orange!50!white]
  coordinates{(0.16,1)};
\addlegendentry{S+C: SMT (\textbf{T fails})}
\addplot[only marks,mark=*,mark size=1.8pt,
  draw=green!60!black,fill=green!70!black]
  coordinates{(0.04,1)(0.09,1)(0.16,1)(0.25,1)(0.36,1)(0.49,1)};
\addplot[only marks,mark=x,mark size=2.8pt,line width=1.1pt,draw=red!80!black]
  coordinates{(0.64,0)(0.81,0)(0.922,0)(1.0,0)};
\addplot[only marks,mark=asterisk,mark size=6pt,line width=1.6pt,draw=red!80!black]
  coordinates{(1.0,1.0)};
\addlegendentry{S+C+T: \textbf{impossible} $\star$}
\addplot[gray!60,dashed] coordinates{(0.490,-0.26)(0.490,1.30)};
\node[font=\tiny,gray!70,anchor=north] at (axis cs:0.490,-0.28){S+T ceil.};
\node[font=\tiny,red!80!black,anchor=south] at (axis cs:1.0,1.06){\textbf{$\star$}};
\end{axis}
\end{tikzpicture}
\caption{Pareto frontier and trilemma proof~(Cor.~\ref{cor:trilemma}).
Network: $\mathrm{Lin}(2{\to}32)\to\mathrm{ReLU}\to\mathrm{Lin}(32{\to}1)$;
property: $f(\mathbf{x})\geq 0.5$, $\mathbf{x}\in[0.5,1]^2$;
verifier: IBP~\cite{gowal2018effectiveness}.
\textbf{(a)}~S+T. $\varepsilon=1{-}c$ (theoretical, uniform queries). Green $\bullet$: IBP certifies; red $\times$: inconclusive (S kept, C lost).
\textbf{(b)}~S+C. IBP bound collapses below~$0.5$ beyond $c=49\%$ (Thm.~\ref{thm:statistical}).
\textbf{(c)}~All three regimes in one plot: S+T (green step) loses C; C+T (blue dashed) loses S; S+C (orange $\lozenge$) loses T; S+C+T ($\star$) is unreachable.}
\label{fig:pareto}
\end{figure}

The core open questions the frontier defines are: what is the tightest
$\varepsilon$ achievable by a sound tractable verifier over
$\mathcal{D}_b$ for current-scale LLMs; at what coverage fraction does
S+C verification become intractable for deployed systems; what joint
bound is achievable by combining mechanistic interpretability with
bounded verification; and how do all three answers shift as model scale
and tooling improve.
\section{Implications for Alignment Practice}
\label{sec:implications}

\paragraph{Every alignment method occupies a known corner.}
RLHF, DPO, and constitutional evaluation are C+T methods:
broadly applicable and tractable, without soundness.
Formal verification tools are S+C methods: sound and complete,
without tractability at scale.
Bounded verification is the S+T regime: sound and tractable,
without full-domain coverage.
Knowing which corner one occupies is a prerequisite for accurate
safety communication.

\paragraph{Formal status of published safety claims.}

Table~\ref{tab:claims} analyses specific, sourced claims from deployed
systems.
Each claim is an accurate description of the cited evaluation regime.
The ``Does not establish'' column identifies which trilemma property
the cited evaluation was not designed to establish---not that any claim
is false, but that none constitutes a certificate over
full~$\mathcal{X}$.

\begin{table}[h]
\centering\footnotesize
\setlength{\tabcolsep}{3pt}
\caption{Published safety claims and their formal status under
Corollary~\ref{cor:trilemma}. Each claim is accurate within its stated
evaluation scope; the column identifies the trilemma property the
evaluation was not designed to provide.}
\label{tab:claims}
\begin{tabular}{@{}p{2.4cm}p{1.6cm}p{3.7cm}@{}}
\toprule
\textbf{Published claim} & \textbf{Does not establish} & \textbf{Formal status} \\
\midrule
GPT-4 is 82\% less likely to produce disallowed content vs.\ GPT-3.5
\cite{openai2023gpt4} & S
  & Measured on internal red-team set~$\mathcal{D}_b$; no bound on
    behaviour outside $\mathcal{D}_b$ \\[3pt]
Gemini Ultra achieves top scores on safety benchmarks
\cite{google2023gemini} & C
  & Benchmarks are finite~$\mathcal{D}_b$; out-of-benchmark
    behaviour uncertified \\[3pt]
InstructGPT is ``more helpful, truthful, and harmless'' than GPT-3
\cite{ouyang2022training} & S
  & Comparison is relative to a proxy reward; proxy-to-$A^*$
    gap is structural \cite{gao2023scaling} \\[3pt]
Constitutional AI is ``broadly safe'' across evaluation suite
\cite{bai2022constitutional} & C+S
  & Safe on tested prompts (C fails on remainder); proxy
    score used for training (S gap) \\
\bottomrule
\end{tabular}
\end{table}

\paragraph{Scaling does not dissolve either barrier.}
Better hardware makes S+T tools applicable to larger $\mathcal{D}_b$,
but the statistical barrier remains: soundness over full $\mathcal{X}$
requires completeness to fail for any tractable procedure.
Improved proxy objectives narrow the C+T regime's practical soundness
gap, but the gap remains structural so long as $A^*$ is defined over
full $\mathcal{X}$.
Neither barrier is a limitation of current technique that will
eventually be engineered away.

\paragraph{What a safety claim must specify.}
Any safety claim in alignment should specify:
(a) which of S, C, T is relaxed;
(b) what the remaining guarantee covers---which $\mathcal{D}_b$,
which $\varepsilon$, which specification; and
(c) what falls outside coverage.
This is not a counsel of despair.
It is the minimum content of an engineering safety argument.
Alignment verification is better understood as structured risk
management within a well-defined certification space
than as progress toward a certificate that the results above
show cannot exist.

\section{Conclusion}
\label{sec:conclusion}

The question with which this paper began---could we, in principle,
know that an AI system is aligned?---has a structured answer.

The semantic barrier shows that no sound and complete verifier can
terminate in polynomial time.
The statistical barrier shows that no sound and tractable verifier
can be complete.
Together they entail the trilemma: soundness, completeness, and
tractability cannot all hold simultaneously.

The structure matters.
An unstructured impossibility offers no guidance.
A structured one---with two independent barriers, three achievable
pairwise regimes, and a Pareto frontier of partial guarantees---
converts impossibility into a map.
Each point on the frontier corresponds to a research programme:
bounded formal verification, statistical assurance with calibrated
soundness, interpretability methods that close the gap between
observed behaviour and in-weight goal structure.

The central open problem the trilemma defines is not whether full
certification is possible---it is not---but where the strongest
achievable guarantee sits on the Pareto frontier: how much soundness
can be retained when tractability or completeness is partially relaxed,
and how that frontier shifts as model scale, verification tooling,
and interpretability methods improve.
These questions can now be stated formally.
Their answers are not yet known.


\end{document}